\newtheorem{theorem}{Theorem}
\newtheorem{lemma}{Lemma}
\newtheorem{definition}{Definition}
\title{\LARGE \bf
MS*: A New Exact Algorithm for Multi-agent Simultaneous Multi-goal Sequencing and Path Finding
}
\author{Zhongqiang Ren$^{1}$, Sivakumar Rathinam$^{2}$ and Howie Choset$^{1}$
	\thanks{$^{1}$ Zhongqiang Ren and Howie Choset are with the Robotics Institute and the Department of Mechanical Engineering at Carnegie Mellon University, 5000 Forbes Ave., Pittsburgh, PA 15213, USA. Emails: \{zhongqir, choset\}@andrew.cmu.edu
	}%
	\thanks{$^{2}$Sivakumar Rathinam is with the Department of Mechanical Engineering, Texas A\&M University,
		College Station, TX 77843-3123.
		Email: srathinam@tamu.edu
	}
}
\begin{document}

\maketitle
\thispagestyle{plain}
\pagestyle{plain}
\pagenumbering{arabic}

\begin{abstract}
	In multi-agent applications such as surveillance and logistics, fleets of mobile agents are often expected to coordinate and safely visit a large number of goal locations as efficiently as possible. The multi-agent planning problem in these applications involves allocating and sequencing goals for each agent while simultaneously producing conflict-free paths for the agents. In this article, we introduce a new algorithm called MS* which computes an optimal solution for this multi-agent problem by fusing and advancing state of the art solvers for multi-agent path finding (MAPF) and multiple travelling salesman problem (mTSP). MS* leverages our prior subdimensional expansion approach for MAPF and embeds the mTSP solvers to optimally allocate and sequence goals for agents. Numerical results show that our new algorithm can solve the multi-agent problem with 20 agents and 50 goals in a minute of CPU time on a standard laptop.

\end{abstract}


\graphicspath{{figures/}}

\vspace{-1mm}
\section{Introduction}
\vspace{-1mm}

\graphicspath{{figures/}}

Multi-agent applications such as information acquisition in surveillance or package delivery in logistic services require agents to safely visit a large number of goal locations as efficiently as possible.
A fundamental problem in these applications, referred to as the Multi-agent simultaneous multi-goal sequencing and path finding (MSMP), aims to allocate and sequence goals for each agent and plan conflict-free paths for the agents while minimizing the sum of the travel costs.
MSMP generalizes the single-agent path finding problem in~\cite{chour2021s} to multiple agents.
MSMP also extends the multi-agent path finding (MAPF) problem~\cite{ma2016optimal, stern2019multi} found in the robotics community as well as the well-known multiple traveling salesman problem (mTSP)~\cite{OberlineIEEEMagazine2010} addressed in the optimization community.
Specifically, MAPF generally assumes the number of goals is equal to the number of agents and are mainly about finding conflict-free paths for the agents.
On the other hand, the mTSP is a combinatorial problem that only addresses the allocation and sequencing of goals for agents without considering the conflicts between the paths of agents.
The MSMP considers both path finding and goal sequencing concurrently (Fig.~\ref{fig:test_case}).

\begin{figure}
  \centering
  \includegraphics[width=0.4\linewidth]{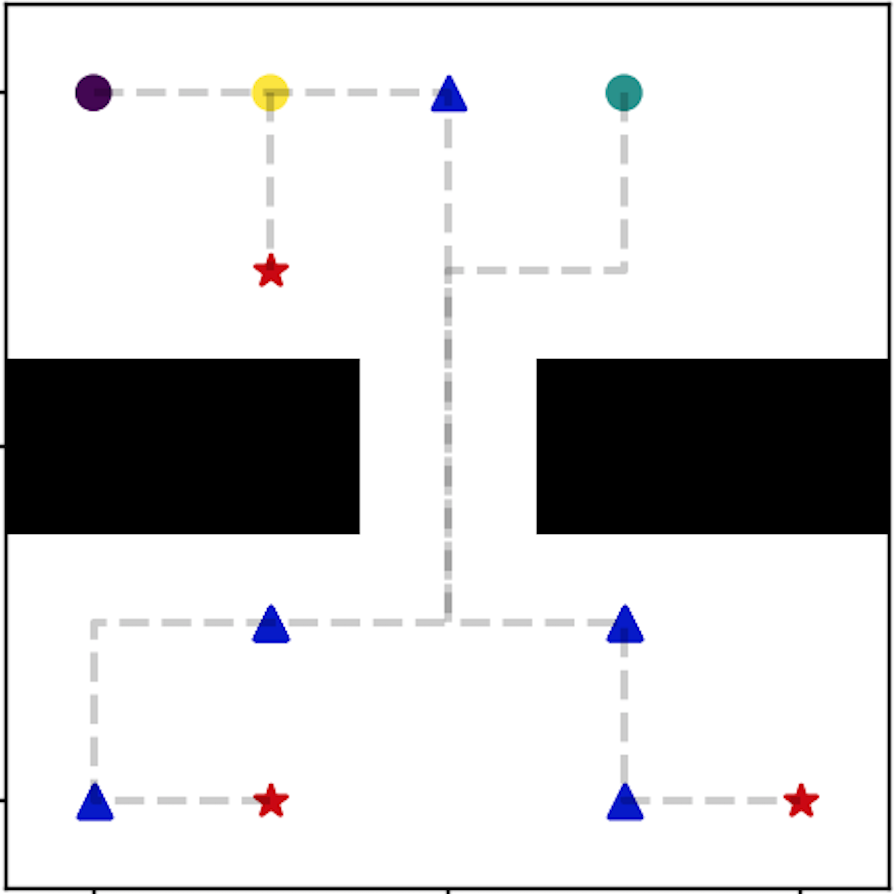}
  \vspace{-1mm}
	\caption{
	An illustration of the challenges in MSMP. The stars, triangles and circles represent the starts, goals and destinations respectively. Each goal must be visited in the middle of some agent's path while the planned paths of agents must also be conflict-free.
	}
	\vspace{-6mm}
	\label{fig:test_case}
\end{figure}

While several variants of the MAPF and mTSP have been addressed in the literature in isolation, we are not aware of any exact algorithm that finds an optimal solution for the MSMP problem.
In this work, we take our first step to address this gap and propose a new algorithm named as MS*\footnote{Multi Steiner* (MS*): M refers to multi-agents and S refers to a Steiner traveling salesman path or a tour for each agent.} and a variant named MS*-c. We show that MS*-c can solve the MSMP problem to optimality.
In MS* and MS*-c, we create a new state space and handle conflicts via subdimensional expansion \cite{wagner2015subdimensional} while simultaneously allocating and sequencing goals for agents via state of the art mTSP solvers.
Specifically, the subdimensional expansion method dynamically modifies the dimension of the new search space based on agent-agent conflicts and defers planning in the joint space until necessary.
Concurrently, the complexity in goal allocation and sequencing is addressed by embedding the mTSP solvers in the form of (1) heuristics that underestimate the cost-to-go from any state, and (2) individual optimal policies that constructs the low dimensional search space for subdimensional expansion.
To further reduce the computational complexity, we also define new dominance rules in the proposed state space to prune any partial solution that can not to lead to an optimal solution. Numerically, we perform simulations on instances from~\cite{stern2019multi} with at most 20 agents and 50 goals to verify the performance of the proposed approach.

The rest of the article is organized as follows. Sec.~\ref{sec:prior_work} provides a literature review related to MSMP. We formally define the MSMP problem in Sec.~\ref{sec:problem}.
Next, we describe the MS* algorithm in Sec.~\ref{sec:method}, and discuss its completeness and optimality properties in Sec.~\ref{sec:analysis}. Numerical results are presented in Sec.~\ref{sec:result} with conclusions in Sec.~\ref{sec:conclude}.

\vspace{-1mm}
\section{Prior Work}\label{sec:prior_work}
\vspace{-1mm}

\noindent\textbf{Multi-agent path finding},
as its name suggests, computes an ensemble of collision-free paths that navigate a set of agents from their respective starts to destinations.
Finding optimal paths for MAPF is NP-hard \cite{yu2013structure_nphard} as the problem complexity increases exponentially with respect to the number of agents. MAPF methods tend to fall on a spectrum from decentralized to centralized, trading off completeness and optimality for scalability. Conflict-based search \cite{sharon2015conflict} and subdimensional expansion \cite{wagner2015subdimensional} are widely known approaches for MAPF that lie in the middle of the spectrum.
They are complete and guaranteed to provide optimal solutions; however, they typically handle scenarios where the number of goals or tasks is equal to the number of agents.

\ifthenelse{\boolean{shortver}}{%
}{%
Conflict-based search (CBS) \cite{sharon2015conflict} and its variants \cite{sharon2012meta, boyarski2015icbs,barer2014suboptimal} employs a two-level search, where conflicts along agent paths are detected in the high level search and constrained single agent path planning is leveraged to resolve conflicts in the lower level search. CBS has been extended to several planning problems as discussed in \cite{andreychuk2019multi,cohen2019optimal,honig2018conflict}. 
}

Subdimensional expansion \cite{wagner2015subdimensional}, our prior method, bypasses the curse of dimensionality by dynamically modifying (while attempting to bound) the dimension of the search space based on agent-agent conflicts.
This approach applies to many algorithms, and therefore inherits completeness and optimality, if the underlying algorithm already has these features, while scaling well on average in less crowded environments.
\ifthenelse{\boolean{shortver}}{%
	Applying subdimensional expansion to A* results in M*.
}{%
While general to probablistic roadmaps (PRM) and rapid exploring random trees (RRT) \cite{wagner2012probabilistic}, much of prior work thoroughly explored the application of subdimensional expansion algorithm onto graph search approaches like A* resulting in M* \cite{wagner2015subdimensional}.
}
M* begins by computing an individual optimal policy for every agent and lets them follow the policies towards to their destinations, ignoring any other agents.
When a conflict is found, the subset of agents in conflict are coupled together and a corresponding new search space with increased dimensions is formed where a new search is conducted in order to resolve the conflict locally.

\vspace{1mm}
\noindent\textbf{Multi-goal sequencing} is related to
the traveling salesman problem (TSP), which aims to find an optimal tour in a given graph is one of the most well known NP-Hard problems in the literature \cite{Applegate:2007}. 
Several methods have been developed \cite{Applegate:2007} for the TSP ranging from exact techniques (branch and bound, branch and price) to heuristics and approximation algorithms which trade off optimality for computational efficiency. 
There are numerous generalizations of the TSP which include multiple agents with motion, fuel, capacity and other resource constraints \cite{Riddhi_2010,sundar2016generalized}. 
Multiple TSP (mTSP) is much harder than solving a single TSP since goal locations must be partitioned and allocated to each agent in addition to finding an optimal sequence of the assigned locations for each agent to visit. Our prior work in this area has developed several state of the art approximation algorithms for homogeneous~\cite{malik2007approximation, Rathinam_2007_IEEETASE} and heterogeneous \cite{Riddhi_2010, SWAT2020,bae2015primal} agents using Lagrangian relaxation and primal-dual techniques. All of these methods ignore conflicts between the agent's paths. This issue is explicitly handled in this work. 

\vspace{1mm}
\noindent\textbf{Integrated goal allocation and path finding}
extends MAPF to consider multiple goals~\cite{liu2019task,surynek2020multi,ma2016optimal,henkel2019optimal,honig2018conflict}.
However, most of the prior work either consider the \emph{allocation} aspect of the problem: assigning exactly one goal to each agent from a set of goals, or the \emph{sequencing} aspect of the problem: a set of goals are pre-assigned to each agent and the visiting order is to be determined.
In this work, the multi-goal sequencing in MSMP involves both allocation and sequencing.

\vspace{-1mm}
\section{Problem Description}\label{sec:problem}
\vspace{-1mm}

Let index set $I_N = \{1,2,\dots,N\}$ denote a set of $N$ agents.
All agents move in a workspace represented as a finite graph $G=(V,E)$, where the vertex set $V$ represents the possible locations for agents and the edge set $E =V \times V$ denotes the set of all the possible actions that can move an agent between any two vertices in $V$.
An edge between $u,v\in V$ is denoted as $(u, v)\in E$ and the cost of an edge $e \in E$ is a non-negative real number cost$(e) \in \mathbb{R}^{+}$.

In this article, we use superscript $i \in I_N$ over a variable to represent the specific agent to which the variable belongs (e.g. $v^i\in V$ means a vertex corresponding to agent $i$).
Let $v_o^i\in V$ denote the initial vertex of agent $i$.
There are $N$ \emph{destination} vertices in $G$ denoted by the set $V_d\subseteq V$.
Each agent must terminate its path at any one of these destination vertices. Note here that each destination needs to be assigned to exactly one agent and vice-versa.
In addition, let $V_g = \{t_1,t_2,\dots,t_M\} \subseteq V$ denote the set of $M$ \emph{goal} vertices that must be visited by at least one of the agents in the middle of its path.
For the rest of the article, we will use destinations and goals to represent vertices in $V_d$ and $V_g$ respectively.

Let $\pi^i(v^i_{1}, v^i_{\ell})$ denote a path for agent $i$ that connects vertices $v^i_{1}$ and $v^i_{\ell}$ via a sequence of vertices $(v^i_{1},v^i_{{2}},\dots,v^i_{\ell})$ in the graph $G$. 
Let $g^i(\pi^i(v^i_{1}, v^i_\ell))$ denote the cost associated with the path.
This path cost is the sum of the costs of all the edges present in the path, $i.e.$, $g^i(\pi^i(v^i_{1}, v^i_{\ell})) = \Sigma_{j=1,2,\dots,{\ell-1}} cost(v^i_{{j}}, v^i_{{j+1}})$. 

All agents share a global clock. Each action, either wait or move, requires one unit of time for any agent.
Any two agents $i,j \in I_N$ are claimed to be in conflict if one of the following two cases happens.
The first case is a ``vertex conflict'' where two agents occupy the same vertex at the same time.
The second case is an ``edge conflict'' where two agents travel through the same edge from opposite directions between times $t$ and $t+1$ for some $t$.

The multi-agent simultaneous multi-goal sequencing and path finding (MSMP) problem aims to find a set of conflict-free paths for the agents such that (1) each goal is visited at least once by some agent, (2) the path for each agent starts at its initial vertices and terminates at one of the destinations, and (3) the sum of the cost of the paths is a minimum.

\vspace{-1mm}
\section{MS* Algorithm}\label{sec:method}
\vspace{-1mm}



\subsection{Preliminaries}\label{sec:state_space}

Let $\mathcal{G}=(\mathcal{V},\mathcal{E}) = \underbrace{G \times G \times \dots \times G}_{\text{$N$ times}}$ denote the joint graph which is the cartesian product of $N$ copies of $G$, where each vertex $v \in \mathcal{V}$ represents a joint vertex and $e \in \mathcal{E}$ represents a joint edge that connects a pair of joint vertices.
The joint vertex corresponding to the initial vertices of all agents is $v_o = (v^1_o,v^2_o,\cdots,v^N_o)$. In addition, let $\pi(u,v), u,v \in \mathcal{V}$ represent a joint path, which is a tuple of $N$ individual paths of same length, $i.e.$, $\pi(u,v) = (\pi^1(u^1,v^1), \cdots, \pi^N(u^N,v^N))$. 

Let $a \in \{0,1\}^{M}$ denote a binary vector of length $M$ that indicates the visiting status of goals, $i.e.$, for $m \in \{1,2,\cdots,M\}$, the $m$-th component of $a$, denoted as $a(m)$, is equal to 1 if goal $t_m \in V_g$ is visited by some agent, and $a(m)=0$ otherwise.
To compare two binary vectors, a {\it dominance} relationship is established as follows.
\vspace{.1cm}

\begin{definition}[Binary Dominance]
For any two binary vectors $a_k$ and $a_l$, $a_k$ dominates $a_l$ ($a_k \succeq_b a_l$), if both the following conditions hold:

\begin{itemize}
	\item $a_k(m)\geq a_l(m)$, $\forall m \in \{1,2,\cdots,M\}$. 
	\item $a_k(m)> a_l(m)$, $\exists m \in \{1,2,\cdots,M\}$.
\end{itemize}
\end{definition}
\vspace{.1cm}

The state space in this work is the cartesian product of the joint graph and the space of binary vectors, $i.e.$, $\mathcal{G} \times \{0,1\}^{M}$.
A state $s=(v,a)$ is a tuple which includes a joint vertex $v\in \mathcal{V}$ occupied by the agents and a binary vector, $a \in \{0,1\}^{M}$. 
For the rest of the work, we use $v(s_k)$ and $a(s_k)$ to denote the joint vertex and binary vector contained in state $s_k$ respectively.
The initial state of all agents is $s_o = (v_o, 0^{M} )$.
The search algorithm conducts A*-like~\cite{astar} search and every state $s_k$ represents a partial solution from $s_o$ to $s_k$, which contains a joint path from $v_o$ to $v(s_k)$. The algorithm terminates when one of the states in the \emph{final} set $S_f:=\{(v_f,1^{M})\}$ is expanded, where $v_f$ is a joint vertex that represents a permutation of the destinations in $V_d$. Additionally, for any state $s_k$ during the search process, as in A*, let $g(s_k)$ denotes the least cost of any joint path found thus far from $s_o$ to $s_k$. To compare two states, we define a dominance rule between them as follows. 
\begin{definition}[State Dominance]
For any two states $s_k$ and $s_l$ with $v(s_k)=v(s_l)$, $s_k$ dominates $s_l$ if either of the following two conditions holds:
\begin{itemize}
	\item $a(s_k) \succeq_b a(s_l)$, $g(s_k) \leq g(s_l)$, or
	\item $a(s_k) = a(s_l)$, $g(s_k) < g(s_l)$.
\end{itemize}
\end{definition}
If $s_k$ does not dominate $s_l$, $s_l$ is then non-dominated by $s_k$.
Let $\alpha(v)=\{s_k | v(s_k)=v\}$ denote a set of states generated by the algorithm at joint vertex $v$.
Initially, $\alpha(v_o)$ contains $s_o$ only, and $\alpha(v)=\emptyset, \forall v \in \mathcal{V}, v\neq v_o$.
When a new state $s_l$ is generated by the algorithm during the search, $s_l$ is inserted into $\alpha(v(s_l))$ only when $s_l$ is non-dominated by any states in $\alpha(v(s_l))$.

A heuristic function is also introduced to guide the search, which maps a state to a non-negative real number ($i.e.$ $h(s) \geq 0$) that underestimates the cost-to-go before finding a solution.
In addition, let $f(s) := g(s)+h(s)$ represent the $f$-value of state $s$, and at any stage of the algorithm, let OPEN denote the open list which contains candidate states to be expanded. As commonly done, we sort the candidate states in OPEN based on their $f$-values.

In addition, let collision set $I_C(s) \subseteq I_N$ represent a set of agents in conflict at state $s$. To detect conflicts, collision function $\Psi:\mathcal{V} \times \mathcal{V} \rightarrow 2^{I_N}$ is introduced to check if there is any vertex or edge conflicts given two neighboring joint vertices. Collision function $\Psi$ returns either an empty set if no conflict is detected, or the set of agents that are in conflict.

\subsection{Key features of the Algorithm}\label{sec.algo_overview}
MS* (Algorithm \ref{alg:ms*}) conducts a M*-like~\cite{wagner2015subdimensional} search in the new state space to find an optimal joint path for the agents.
However, there are {\bf three crucial aspects} where MS* differs from M* that makes it applicable to the MSMP. 
{\bf First}, to construct individual policies and the heuristic value from a state to any (final) state in ${S_f}$, we ignore the conflicts between the agent's paths and solve the resulting mTSP. 
The mTSP in this paper is solved by transforming it to an equivalent TSP and using the LKH algorithm \cite{helsgaun2000effective} (a state of the art TSP solver).
The solution to the transformed TSP can then be readily converted into a path for each agent as explained later in section \ref{sec:heu_and_policy}.
{\bf Second}, when a conflict is found, back-propagating only the subset of agents involved in the conflict may not be sufficient to guarantee an optimal solution for MSMP.
Therefore, we have two variants of the MS* algorithm, one which back-propagates in a similar way as the standard M* and another called MS*-c which back-propagates the entire set of agents.
While the numerical results show that MS* finds optimal solutions for almost all of the instances, our proof of optimality currently only applies to MS*-c.
{\bf Third}, we use the state dominance rules to prune any state from OPEN, which can not be part of an optimal solution.

\begin{algorithm}[htbp]
	\caption{Pseudocode for MS* {\color{blue}(MS*-c)}}\label{alg:ms*}
	\small
	\begin{algorithmic}[1]
		\State{Initialize OPEN with $s_o=(v_o,0^M)$}
		\While{OPEN not empty} 
		\State{$s_k=(v_k,a_k) \gets$ OPEN.pop() }
		\State{\textbf{if} $s_k\in S_f$  \textbf{then}}
		\State{\indent $\pi \gets$ \text{Reconstruct($s_k$)}} 
		\State{\indent \textbf{return} $\pi$}
		
		\State{$S^{ngh} \gets$ \textbf{GetNeighbor}($s_k$) } 
		\ForAll{$s_l=(v_l,a_l) \in S^{ngh}$}
		\State{add $s_k$ to back\_set($s_l$)}
		\State{\textbf{if} $\Psi(v_k,v_l) \neq \emptyset$}
		\State{\indent $I_C(s_l) \gets I_C(s_l) \cup \Psi(v_k,v_l)$}
		\State{\indent {\color{blue}(Notes: for MS*-c, $I_C(s_l) \gets I_N$)}}
		\State{\indent\textbf{BackProp($s_k$, $I_C(s_l)$)}}
		\State{\indent\textbf{continue}}
		\State{$f(s_l) \gets g(s_l)$ + $h(s_l)$}
		\State{\textbf{if} $s_l$ is non-dominated by any states in $\alpha(v_l)$ \textbf{then} }
		\State{\indent add $s_l$ to OPEN }
		\State{\indent parent($s_l$) $\gets s_k$}
		\State{\textbf{else} (Notes: $s_l$ is dominated by states in $\alpha(v_l)$)}
		\State{\indent$A \gets \{s_l' | s_l' \in \alpha(v_l), s_l' \succeq s_l \}$.}
		\State{\indent\textbf{for all} $s_l' \in A$ \textbf{do}}
		\State{\indent\indent\textbf{BackProp}($s_k$, $I_C(s_l')$)}
		\State{\indent\indent add $s_k$ to back\_set($s_l'$)}
		\EndFor
		\EndWhile \label{}
		\State{\textbf{return} Failure (no solution)}
	\end{algorithmic}
\end{algorithm}

MS* starts by inserting the initial state $s_o$ into OPEN. In every round of the search, a state $s_k$ with a minimum $f$-value is popped from OPEN. If $s_k\in S_f$, the algorithm outputs a solution by simply tracing the parents of the states iteratively from $s_k$ to $s_o$ and terminates. Otherwise, $s_k$ is further expanded.

To expand $s_k$, all neighboring state in the limited neighbor set (denoted as $S^{ngh}$), of $s_k$ are explored (see Sec.~\ref{sec:expand}).
For each $s_l \in S^{ngh}$, the collision checking function $\Psi(v(s_k),v(s_l))$ is invoked and the resulting collision set $I_C(s_l)$ is back-propagated, which updates the collision set of the parent states recursively.
As noted before, back-propagating $I_C(s)$ may not be enough to guarantee that the algorithm returns an optimal solution.
Thus, in MS*-c, the index set $I_N$, which contains all agents, is back-propagated instead of $I_C(s_l)$, when there is conflict detected at state $s_l$.
Any neighboring state $s_l$ that leads to a conflict is discarded after the collision sets are back-propagated (line 13).
If $s_l$ is not in conflict, $s_l$ is then check for dominance against states in $\alpha(v(s_l))$.
If $s_l$ is not dominated, it is added to OPEN for future expansion.
Otherwise, $s_l$ is discarded and for all states $s_l'$ that dominates $s_l$, $i.e.,\; \{s_l' | s_l' \in \alpha(v_l), s_l' \succeq s_l \}$, $I_C(s_l')$ is backpropagated to $s_k$. By doing so, MS* (and MS*-c) keeps updating the collision set of ancestor states of $s_l$ after $s_l$ is pruned by dominance~\cite{ren2021subdimensional}.
Finally, MS* either finds a solution or terminates when OPEN becomes empty, which indicates there is no solution for the problem. More details of the algorithm are presented in the following subsections.

\subsection{State Expansion}\label{sec:expand}
To expand a state, we leverage the concept of ``limited neighbors'', which was originally introduced in our prior work \cite{wagner2015subdimensional}. The limited neighbors $S^{ngh}$ of a state $s_k$ is a set of neighboring states $s_l$ that can be reached from $s_k$ and is determined by $I_C(s_k)$.
Denote $v_k=v(s_k)$ and for each agent $i$, if $i$ is not part of the collision set ($i \notin I_C(s)$), agent $i$ is only allowed to follow the path $\phi^i(v_k^i,v^i_d)$ which moves it from $v_k^i$ to its assigned destination $v^i_d$ (see Sec.~\ref{sec:heu_and_policy}). If agent $i$ is in the collision set ($i \in I_C(s_k)$), it is allowed to explore all possible neighbors of $v^i_k$ in $G$. The binary vector $a(s_l)$ is updated correspondingly if any un-visited goals get visited in $v^i_l$ by some agent $i \in I_N$, where $v_l=v(s_l)$.

\begin{algorithm}[h]
	\caption{Pseudocode for BackProp}\label{alg.backprop}
	\small
	\begin{algorithmic}[1]
		\State{INPUT: $s$, $I$}
		\State{\textbf{if} $I \nsubseteq I_C(s)$ \textbf{then}}
		\State{\indent $I_C(s) \gets I \cup I_C(s)$}
		\State{\indent\textbf{if} $s \notin$ OPEN \textbf{then}}
		\State{\indent\indent add $s$ to OPEN}
		\State{\indent\textbf{for all} $s_k \in$ back\_set($s$) \textbf{do}}
		\State{\indent\indent\textbf{BackProp}($s_k$, $I_C(s)$)}
	\end{algorithmic}
\end{algorithm}

$S^{ngh}$ of a state $s$ varies once $I_C(s)$ changes, which dynamically modifies the sub-space embedded in the overall state space that can be reached from $s$. 
Note that $I_C(s)$ is updated recursively when Algorithm \ref{alg.backprop} is called. 
To back-propagate collision sets, a data structure ``back\_set'' is defined at every state. Intuitively, the back\_set at state $s$ contains all parent states from which $s$ is reached.  
When $I_C(s)$ is enlarged, $I_C(s)$ is back-propagated to every state in back\_set($s$).
When back-propagating a collision set $I_C(s)$ to a state $s_k \in$ back\_set($s$), if $I_C(s_k)$ is not a super set of $I_C(s)$, then $I_C(s_k)$ is updated by taking the union of $I_C(s)$ (line 3). 
In addition, $s_k$ is re-opened, which makes MS* expand $s_k$ again with a possibly larger limited neighbor set. 

The aforementioned state expansion strategy is  ``aggressive'' in a sense that, at a state $s$, only agents $i \in I_C(s)$ are allowed to explore all possible moves and agents $i \notin I_C(s)$ are restricted to their individual optimal policies.
However, in MSMP, all agents are ``coupled'' in the space of binary vectors $\{0,1\}^M$ as a goal visited by one agent does not need to be visited by any other agents.
Therefore, when agent $i \in I_C(s)$ changes its next move at state $s$, the optimal policies of other agents $j \notin I_C(s)$ may change.
This requires agents $j \notin I_C(s)$ to consider all possible moves as well in order to guarantee the optimality.
Based on this observation, we propose a ``conservative'' (-c) expansion strategy and name the corresponding MS* algorithm with this strategy as MS*-c.
In MS*-c, when a collision is detected between agents $I_C(s)\subseteq I_N$ at state $s$, instead of back-propagating $I_C(s)$ as MS* does, MS*-c back-propagates $I_N$, the index set that includes all agents.
By doing so, all possible combinations of individual agent moves are considered in the state expansion.
Notice here that MS*-c still differs from a naive A* search in the state space because for states where collision sets are not back-propagated, agents are still restricted to follow their optimal policies.

\subsection{Multi-goal Sequencing, Policies and Heuristics} \label{sec:heu_and_policy}

\noindent\textbf{Multi-goal Sequencing.}
Ignoring the possible conflicts, the (multi-)goal sequencing procedure (Fig.~\ref{fig:task_sequencing}) takes a state $s$ as input and outputs a set of \emph{goal sequences} for all agents. Each goal sequence is an ordered list of vertices in $G'$, which starts from current vertex of an agent, as specified in $v(s)$, visits a subset of un-visited goals assigned to the agent in order, and ends at some destination vertex in $V_d$.
This procedure has three steps:
\begin{itemize}
	\item The goal sequencing problem in $G$ is transformed~\cite{oberlin2009transformation} to a single asymmetrical traveling salesman problem (ATSP) in a transformed graph $G'$.
	\item Solve the ATSP in $G'$ and get an optimal tour $\tau_*$ in $G'$.
	\item Partition the optimal tour $\tau_*$ in $G'$ into $N$ goal sequences for the agents in $G$.
\end{itemize}

\begin{figure}
	\centering
	\includegraphics[width=\linewidth,height=9.2cm]{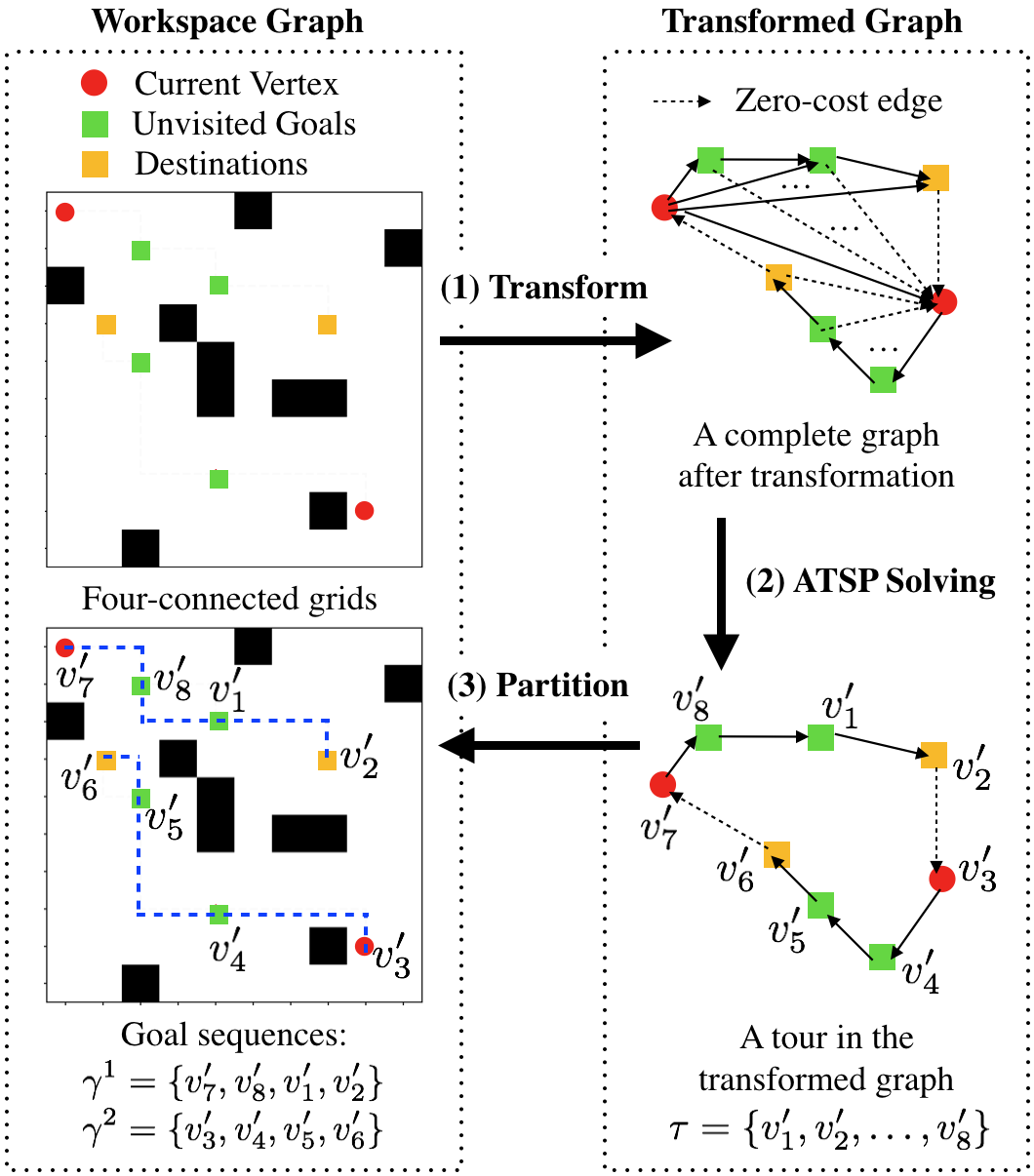}
	\vspace{-4mm}
	\caption{The workflow of multi-goal sequencing.}
	\label{fig:task_sequencing}
	\vspace{-6mm}
\end{figure}

The transformation step leverages our prior work~\cite{oberlin2009transformation}.
We use symbols with prime (e.g. $G'$) to denote the variables after transformation.
Given a state $s$, let $V_g(s)$ denote the subset of goals that are not visited, $i.e.$, $V_g(s) = \{t_m : a(m) = 0, \forall m = 1,2,\dots,M, a = a(s)\}$.
Let $G'=(V',E')$ represent a transformed graph, where $V'$ is defined as the union of (i) the current vertices of the agents, (ii) the un-visited goals in $V_g(s)$, and (iii) the destinations $V_d$, $i.e.$, $V' = \{v^i | v=v(s), \forall i \in I_N\} \cup V_g(s) \cup V_d$.
Here $G'$ is a complete ($i.e.$ fully-connected) directed graph and the travel cost, $cost'(v'_a,v'_b)$, for any edge joining $v'_a\in V'$ and $v'_b\in V'$ is defined in Table \ref{table:transform_edge_cost}, where $g(\pi_*)$ denotes the cost of a min-cost path in $G$ from $v'_a$ to $v'_b$, which can be computed by any single-agent min-cost path algorithms such as A*~\cite{astar}. 
\begin{table}[h]
	\centering
	\begin{tabular}{ |c|c|c|c| } 
		\hline
		$cost'(v'_a,v'_b)$ & $v'_b \in v$ & $v'_b \in V_g(s)$ & $v'_b \in V_d$ \\
		\hline
		$v'_a \in v$ & $\infty$ & $g(\pi_*)$ & $g(\pi_*)$ \\ 
		\hline
		$v'_a \in V_g(s)$ & $\infty$ & $g(\pi_*)$ & $g(\pi_*)$ \\ 
		\hline
		$v'_a \in V_d$ & 0 & $\infty$ & $\infty$ \\
		\hline
	\end{tabular}
\caption{Edge cost definition in the transformed graph $G'$.}
\vspace{-2mm}
\label{table:transform_edge_cost}
\end{table}

The LKH algorithm~\cite{tinos2018efficient}\footnote{\url{http://akira.ruc.dk/~keld/research/LKH/} }, which is one of the best heuristic-based solver currently available for TSP, is used to find a tour in $G'$.
The LKH, in addition to providing a near-optimal solution, also provides a tight lower bound which can be used as an underestimate for the cost to go from state $s$.
It has been shown in~\cite{oberlin2009transformation} that the near-optimal tour consists of exactly $N$ zero cost edges.
By construction, each of these zero cost edges connects a destination in $V_d$ to a current vertex of an agent.
This allows us to remove all the zero cost edges from the tour and obtain a \emph{goal sequence} $\gamma^i := (u^i_1,u^i_2, \dots, u^i_{\ell^i}), i\in I_N$ for each agent $i=1,\cdots,N$, where $u^i_1=v^i$ is the current vertex of agent $i$, $u^i_{\ell^i}$ is the assigned destination of agent $i$ and all other intermediate vertices $u^i_2,\dots,u^i_{\ell-1}$ are un-visited goals assigned to agent $i$ in order.

\noindent\textbf{Policy Construction and Heuristic Value.}
The goal sequence $\gamma^i$ can be further extended to a path $\pi^i(v^i,u^i_{\ell^i})$ for agent $i$, where any two adjacent vertices in $\gamma^i$ is replaced with a min-cost path between them in $G$.
The path $\pi^i(v^i,u^i_{\ell^i})$ for agent $i$ provides an individual policy $\phi^i$, which maps a vertex to the next vertex along the path and thus direct agent $i$ from $v^i$ to $u^i_{\ell^i}$ if no conflict detected along the path.
Consequently, with goal sequencing at state $s$, a joint path $\pi$ is found for all agents, which moves the agents from state $s$ to a final state in $S_f$.
Since any possible conflicts are ignored, the sum of the cost of individual paths in $\pi$ provides an underestimate of the cost-to-go at state $s$ and is used as $h(s)$.


\noindent{\bf Remark.} The procedure used here is one method to for goal sequencing (or the mTSP).
Several other exact algorithms, such as branch-and-cut~\cite{sundar2015exact} can be used in place of the transformation/LKH method.
An exact algorithm provides an optimal tour for goal sequencing, which in turn allows MS* to provide an optimal solution for MSMP.
Bounded sub-optimal tour leads to inflated heuristic values for states in MS*. As MS* select states based on $f$-values as A* does, inflated heuristic values lead to bounded sub-optimal solutions for the MSMP~\cite{Pearl1982}.
In the ensuing analysis, to outline the main ideas, we assume that the goal sequencing procedure returns an optimal solution. 

\vspace{-1mm}
\section{Analysis}\label{sec:analysis}
\vspace{-1mm}

We provide a sketch proof of the properties of MS*-c assuming the (multi-)goal sequencing returns an optimal tour. 


\begin{lemma}\label{lem:transform}
	Given a state $s$, let $C(\tau_*)$ denote the cost of the tour $\tau_*$ computed by the goal sequencing procedure and let $\pi_m(v(s),v_f)$ denote a joint path in $G$ with the minimum cost that starts from$s$, visits all un-visited goals and ends at destinations. Then $C(\tau_*) = g(\pi_m(v(s),v_f))$.
\end{lemma}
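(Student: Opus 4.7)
The plan is to prove the equality by showing two inequalities, each corresponding to a cost-preserving translation between joint paths in $G$ and tours in $G'$. The transformation recipe of \cite{oberlin2009transformation} is the central tool: edges in $G'$ are priced by shortest-path costs in $G$, destination-to-agent-start edges are free, and infinities rule out illegal orderings. These features are exactly what make the two translations cost-preserving, so I will exploit them in both directions.

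First I would prove $C(\tau_*) \le g(\pi_m(v(s),v_f))$. Take a minimum-cost joint path $\pi_m$. For each agent $i$, list its visited landmarks in order: $v^i$ (its current vertex), then the un-visited goals it picks up, then its assigned destination $u^i_{\ell^i}\in V_d$. Since $\pi_m$ is cost-minimal, between any two consecutive landmarks agent $i$ must travel along a min-cost path in $G$; otherwise we could splice in a shorter one and contradict minimality. Thus the cost of the $i$-th agent's path equals the sum of $G'$-edge costs along its landmark sequence. Concatenating the $N$ agent sequences by inserting zero-cost edges from each agent's destination to the next agent's start vertex produces a Hamiltonian tour in $G'$ that visits every vertex of $V'$ exactly once. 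Its $G'$-cost equals $g(\pi_m(v(s),v_f))$, so the minimum tour cost $C(\tau_*)$ is no larger.

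Next I would prove $C(\tau_*) \ge g(\pi_m(v(s),v_f))$. By the structural result cited in Section~\ref{sec:heu_and_policy}, $\tau_*$ contains exactly $N$ zero-cost edges, each of the form (destination $\to$ agent start), and the infinite entries in Table~\ref{table:transform_edge_cost} force every other edge to go start $\to$ goals $\to$ destination in the correct order. Removing the $N$ zero-cost edges partitions $\tau_*$ into $N$ goal sequences $\gamma^i$, with $u^i_1=v^i$, $u^i_{\ell^i}\in V_d$, covering every un-visited goal exactly once. Replace each $G'$-edge $(u^i_j,u^i_{j+1})$ by a min-cost path in $G$ between the two vertices; stitching these together yields an individual path for each agent and hence a joint path from $v(s)$ to some $v_f\in S_f$ whose total cost equals $\sum_i \sum_j \mathrm{cost}'(u^i_j,u^i_{j+1}) = C(\tau_*)$. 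Since $\pi_m$ is minimum-cost over all such joint paths, $g(\pi_m(v(s),v_f))\le C(\tau_*)$.

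Combining the two inequalities gives the desired equality. The only mildly delicate point, which I would justify carefully, is the claim that each edge used in the tour-to-path direction is a \emph{realizable} piece of a joint path despite the absence of conflict checking; this is immediate from the lemma's setting, since $\pi_m$ is defined as the minimum-cost joint path \emph{ignoring} inter-agent conflicts, matching the way the goal sequencing procedure is constructed. Everything else is a direct bookkeeping argument on edge costs, so I do not anticipate any deeper obstacle.
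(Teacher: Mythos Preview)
Your argument is correct. The paper's own proof is a single sentence that cites \cite{oberlin2009transformation} for the fact that the optimal cost of $N$ goal sequences spanning $V'$ equals the cost of the shortest ATSP tour in $G'$, and then declares the lemma immediate. You instead unpack that citation and give a self-contained two-inequality proof: from a minimum-cost joint path you build a feasible tour in $G'$ of equal cost, and from an optimal tour you extract a feasible joint path of equal cost. This is exactly the content of the transformation result being cited, so the two approaches are not really different in substance---yours is simply the explicit version of what the paper imports. The benefit of your write-up is that it is independent of the reader knowing \cite{oberlin2009transformation}; the benefit of the paper's is brevity. One small point you may want to tighten in the $C(\tau_*)\le g(\pi_m)$ direction: you should note (via the triangle inequality for shortest-path costs in $G$) that without loss of generality each un-visited goal appears in exactly one agent's landmark list in $\pi_m$, so the concatenated sequence is genuinely Hamiltonian in $G'$; otherwise the constructed tour could repeat vertices.
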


\ifthenelse{\boolean{shortver}}{%
}{%
\begin{proof}
	It is known from \cite{oberlin2009transformation} that the optimal cost of finding $N$ goal sequences for the agents spanning the nodes in $G'$ is equal to the cost of the shortest TSP tour in $G'$. The Lemma follows.
\end{proof}
}


\begin{lemma}\label{lem:no_sol}
    If there is no feasible solution, MS*-c reports failure in finite time.
\end{lemma}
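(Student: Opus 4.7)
The plan is to establish that MS*-c performs only finitely many main-loop iterations with finite work per iteration, so its while loop must terminate; since by hypothesis no state in $S_f$ can ever be popped with a legitimate joint path, the loop can only exit via OPEN becoming empty, triggering the final failure return.

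First, I would observe that the underlying state space $\mathcal{V}\times\{0,1\}^M$ is finite (at most $|V|^N\cdot 2^M$ elements), and that the state dominance rule ensures each $\alpha(v)$ stores at most $2^M$ non-dominated states (one per possible binary vector value, and typically fewer due to the $\succeq_b$ component). Hence the algorithm can ever hold only a bounded collection of distinct non-dominated states.

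Second, I would bound how often a single state $s$ can be popped from OPEN. Re-insertions happen exclusively inside BackProp (Algorithm~\ref{alg.backprop}), guarded by $I\nsubseteq I_C(s)$ on line~2, so each re-insertion strictly enlarges $I_C(s)\subseteq I_N$. Since $|I_N|=N$, collision sets can strictly grow at most $N$ times per state, so each state is popped at most $N+1$ times. The work per pop is finite: the limited neighbor set $S^{ngh}$ is read off from individual policies precomputed in finite time via the ATSP/LKH procedure of Sec.~\ref{sec:heu_and_policy}, the collision function $\Psi$ has bounded cost, dominance checking inspects a bounded $\alpha(v)$, and the BackProp recursion terminates by the same monotone-growth argument, since every recursive invocation strictly enlarges some collision set inside the finite $I_N$.

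Combining these two facts yields finite total work, so MS*-c halts. Under the no-feasible-solution hypothesis, no state in $S_f$ can be popped with a conflict-free joint path, hence the while loop can only exit via the OPEN-empty branch, at which point the last line returns failure, as required. The main obstacle I anticipate is rigorously ruling out infinite propagation chains through the \emph{back\_set} pointers during BackProp; the guard $I\nsubseteq I_C(s)$ is exactly the mechanism that prevents this, but the argument needs to be made explicit by induction on the depth of the recursion, using the fact that any chain must correspond to a strictly increasing sequence of subsets of the finite set $I_N$.
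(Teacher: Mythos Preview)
Your argument is correct and follows essentially the same route as the paper's: finite state space plus a bound on how many times each collision set can be enlarged gives finitely many (re)expansions, hence termination and failure return when no solution exists. The only notable difference is that the paper exploits the MS*-c--specific fact that $I_C(s)$ jumps directly from $\emptyset$ to $I_N$ and is therefore modified at most \emph{once} per state, whereas you use the looser (but still sufficient) generic bound of at most $N$ strict enlargements.
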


\ifthenelse{\boolean{shortver}}{%
}{%
\begin{proof}
	There is a finite number of states and at each state $s$, the collision set $I_C(s)$ can be modified at most once ($I_C(s)$ is either $\emptyset$ or $I_N$). Thus the sub-space searched by MS*-c will be at most modified for a finite time. Within this sub-space, MS*-c conducts heuristic search (A*-like search) with state dominance pruning. Therefore, if there is no solution, MS*-c terminates in finite time.
\end{proof}
}


\begin{theorem}\label{thm:optimal}
    MS*-c either computes the optimal solution for MSMP or report failure in finite time.
\end{theorem}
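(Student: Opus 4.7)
The plan is to split into two cases as suggested by Lemma~\ref{lem:no_sol}. If the MSMP has no feasible solution, Lemma~\ref{lem:no_sol} already gives finite-time failure, so the nontrivial case is to show that, when a solution exists, MS*-c terminates and returns an optimal one. I would reduce this to the standard A* optimality argument by verifying three ingredients: (i) finite-time termination, (ii) admissibility of $h$, and (iii) that neither dominance pruning nor the limited-neighbor restriction of subdimensional expansion discards every partial solution that could extend to an optimum.

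For (i), the state space $\mathcal{V}\times\{0,1\}^M$ is finite. In MS*-c, the collision set $I_C(s)$ at any state $s$ takes at most the two values $\emptyset$ and $I_N$, because a detected conflict triggers back-propagation of the full set $I_N$, so each state is re-opened at most once. Dominance keeps $|\alpha(v)|$ bounded at every joint vertex $v$. Hence OPEN becomes empty after finitely many expansions. For (ii), Lemma~\ref{lem:transform} identifies $C(\tau_*)$ computed at a state $s$ with $g(\pi_m(v(s),v_f))$, the cost of a minimum-cost joint path that visits all un-visited goals and terminates at destinations while ignoring conflicts. Since every conflict-free completion from $s$ is also a completion of the relaxed (conflict-ignoring) problem, its cost is at least $h(s)=C(\tau_*)$, so $h$ is admissible.

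For (iii) two sub-arguments are needed. First, whenever a conflict is detected at a neighbor $s_l$ of $s_k$, MS*-c back-propagates $I_N$ via back\_set pointers and re-opens every affected ancestor (Algorithm~\ref{alg.backprop}); once an ancestor has $I_C=I_N$, its limited neighbor set coincides with the full joint neighborhood in $\mathcal{G}$, so all agents may explore all joint edges. By induction along parent pointers, any optimal joint path has, at every prefix where a downstream conflict will be encountered, an ancestor whose collision set is eventually enlarged to $I_N$, and that ancestor is re-opened, so the optimal prefix is eventually expanded. Second, when a state $s_l$ is dominated by some $s_l'\in\alpha(v(s_l))$, the definition of state dominance guarantees $a(s_l')\succeq_b a(s_l)$ and $g(s_l')\le g(s_l)$; hence every conflict-free completion of $s_l$ yields a no-worse, conflict-free completion of $s_l'$ (fewer remaining goals to visit, no greater accumulated cost), so at least one undominated representative of every optimal partial solution survives, and BackProp is still invoked on $s_l'$ via the code on lines~20--23 to keep conflict information consistent.

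Combining (i)--(iii), MS*-c behaves as an A*-style search with an admissible heuristic on a dynamically enlarged subspace that provably contains an optimal solution, so the first state of $S_f$ popped from OPEN has $g$-value equal to the optimum. The main obstacle will be the first sub-argument of (iii): carefully tying conflict back-propagation, state re-opening, and dominance pruning together to guarantee that the monotone growth of each $I_C$ is \emph{enough} to uncover the optimal joint path, even when some ancestors of optimal prefixes are themselves dominated and only reachable through back\_set updates on their dominators.
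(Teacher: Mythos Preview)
Your proposal is correct and follows essentially the same approach as the paper's proof: invoke Lemma~\ref{lem:no_sol} for the infeasible case, use Lemma~\ref{lem:transform} to establish admissibility of $h$, argue that back-propagating $I_N$ in MS*-c makes all ancestors re-expand over the full joint neighborhood, and then appeal to the standard A* optimality argument. Your version is considerably more detailed than the paper's sketch (in particular, you explicitly treat finite termination in the feasible case and the soundness of dominance pruning, both of which the paper glosses over), and your closing caveat about the interaction of back-propagation, re-opening, and dominance correctly flags the one place where additional care is genuinely needed.
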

\begin{proof}
    With Lemma \ref{lem:transform}, if no conflict, MS* restricts agents to move along the joint path computed by the goal sequencing procedure and the resulting joint path is guaranteed to be optimal. If there are conflicts at state $s$ along the joint path, with Alg.~\ref{alg.backprop}, all ancestor states of $s$ are updated with $I_C=I_N$ and re-inserted into OPEN for re-expansion with all possible moves considered. Among these moves, MS*-c uses an underestimating heuristic and expands states as in A*, and thus computes an optimal solution.
    With Lemma~\ref{lem:no_sol}, if no solution, MS*-c reports failure in finite time.
\end{proof}

\vspace{-1mm}
\section{Numerical Results}\label{sec:result}
\vspace{-1mm}

\graphicspath{{figures/}}

\begin{figure*}[htbp]
	\centering
	\vspace{-1mm}
	\includegraphics[width=1.0\linewidth]{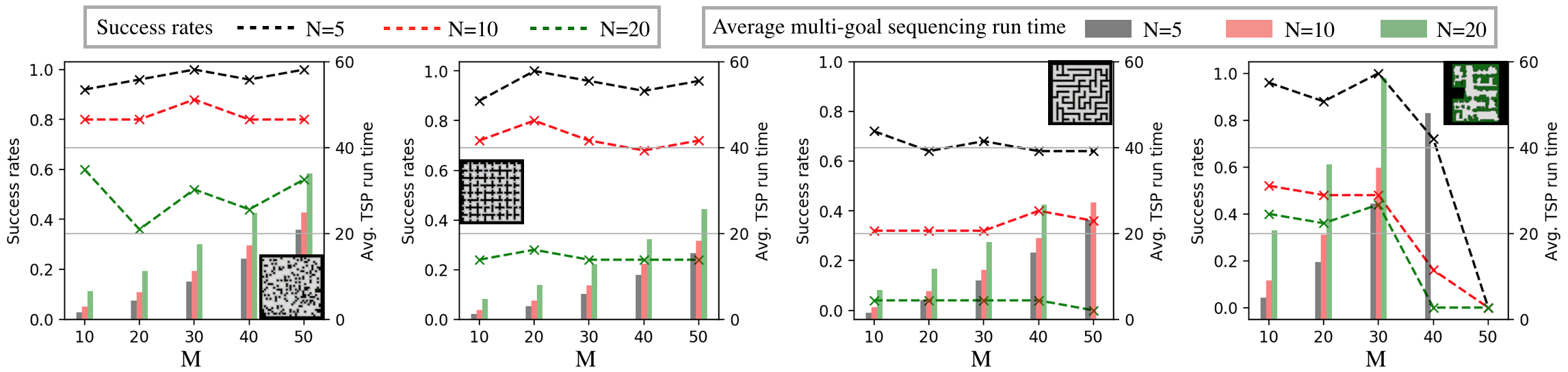}
	\vspace{-6mm}
	\caption{The left vertical axis of each plot represents the success rates of MS* as a function of both the number of agents ($N$) and number of goals ($M$). The right vertical axis of each plot (bar plots) represents the average run time (in seconds) of goal sequencing per call as $M$ varies. The grids used from the left to the right are of type (size): random (32x32), room (32x32), maze (32x32), game map (65x81)}
	\label{fig:all_results}
	\vspace{-5mm}
\end{figure*}

\subsection{Test Settings}\label{sec:experiment_settings}

We implemented both MS* and MS*-c in Python and tested on a computer with a CPU of Intel Core i7 and 16GB RAM.
We selected grids from~\cite{stern2019multi} and make them four-connected.
For each grid, we generated a set of $M$ goals with $M=10,20,\dots,50$ at random location without overlapping with agents' starts or destinations.
We tested with number of agents $N=5,10,20$.
For each test instance, the time bound is \emph{one minute}.
We tested both MS* and MS*-c, and they return the same solution cost for any instance when both of them succeed. In addition, the success rates of MS* is higher than MS*-c within 5\% for all the instances. We thus report only the results of MS* below to make the article concise.



\subsection{Success Rate}\label{sec:succ_rate}
We first measured the success rates of MS* with varying $M$ and $N$.
As shown in Fig.~\ref{fig:all_results}, as $M$ increases, MS* achieves relatively stable success rates except the rightmost game grid, where the initial goal sequencing procedure times out with $M=50$.
When $N$ increases, the chance that agents collide increases when they follow the policies computed by the goal sequencing procedure, which lowers the success rates.
The types of the grid also affects the success rates.
%

\subsection{Multi-goal Sequencing Time}\label{sec:tsp_time}
To get a better sense of the computational burden of (multi-)goal sequencing in MS*, we measured the run time of each goal sequencing call and took the average over all the succeeded test instances.
From Fig.~\ref{fig:all_results}, as one can expect, as $N$ or $M$ increases, each call of goal sequencing takes more time.
When $M=50$ and $N=20$, there are $20\text{ agents}+50\text{ goals}+20\text{ destinations}=90$ vertices in the transformed graph $G'$.
In random, room and maze grid, it takes roughly 30 seconds to compute the transformed graph and call LKH to compute a set of goal sequences, which is more than half of the time limit and is very burdensome.

%

\vspace{-1mm}
\section{Conclusion}\label{sec:conclude}
\vspace{-1mm}
New algorithms, MS* and its variant MS*-c were presented for a multi-agent simultaneous multi-goal sequencing and path finding (MSMP) problem.
The completeness and optimality properties of the algorithms were showed.
Numerical results verifies the performance of both algorithms.
There are several directions for future work.
One can focus on variants of MSMP and MS* can be suitably modified to handle the variants.
One can also focus on fast approximation algorithms for goal sequencing which can improve the performance of the MS*.



\bibliographystyle{plain}
\bibliography{references}

\end{document}